\documentclass[letterpaper]{article}
\usepackage{iccc}
\usepackage{amsmath}
\usepackage{amsthm}
\usepackage{amsfonts}
\usepackage{times}
\usepackage{helvet}
\usepackage{booktabs}
\usepackage{array}
\usepackage{caption} 
\usepackage{ragged2e} 
\usepackage{tikz}
\usepackage{courier}
\usepackage{centernot}
\usepackage{tikz,adjustbox}
\usetikzlibrary{arrows.meta,positioning,matrix}

\newcommand{\beq}{\begin{equation}}
\newcommand{\eeq}{\end{equation}}

\newcommand {\commentout}[1] {}



\def\ints{{{\rm Z} \kern -.35em {\rm Z} }}  
\def\smallints{{{\rm Z} \kern -.3em {\rm Z} }}  
\def\pints{{{\rm I} \kern -.15em {\rm N} }}      
\newcommand{\reals}{\mathbb R}

\def\cplx{{{\rm I} \kern -.45em {\rm C} }}       
\def\l2{\rm {\mathcal L}^{2}(\reals)}            


\newcounter{globalthm}

\newtheorem{theorem}[globalthm]{Theorem}
\newtheorem{definition}[globalthm]{Definition}

\newtheorem{nad}[globalthm]{Notation and Definitions}

\newcommand{\be}{\begin{eqnarray}}
\newcommand{\ee}{\end{eqnarray}}
\newcommand{\bea}{\begin{eqnarray}}
\newcommand{\eea}{\end{eqnarray}}
\newcommand{\beaa}{\begin{eqnarray*}}
\newcommand{\eeaa}{\end{eqnarray*}}
\newcommand{\bnad}{\begin{nad}}
\newcommand{\enad}{\end{nad}}

\newcommand{\calA}{{\cal A}}

\pdfinfo{
/Title (Transformational Creativity in Science: A Graphical Theory)
/Subject (Proceedings of ICCC)
/Author (Schapiro, Black, \& Varshney)}

\title{Transformational Creativity in Science: A Graphical Theory}
\author{Samuel Schapiro\\ 
University of Illinois Urbana-Champaign \\ 
Spiral Works \\
sjs17@illinois.edu
\And Jonah Black  \\
University of Illinois Urbana-Champaign \\
jblac8@illinois.edu
\AND
Lav R.\ Varshney \\ 
 University of Illinois Urbana-Champaign\\
varshney@illinois.edu
}
\setcounter{secnumdepth}{0}

\begin{document} 
\maketitle
\begin{abstract}
\begin{quote}
Creative processes are typically divided into three types: combinatorial, exploratory, and transformational. Here, we provide a graphical theory of transformational scientific creativity, synthesizing Boden's insight that transformational creativity arises from changes in the ``enabling constraints" of a conceptual space \cite{boden92} and Kuhn's structure of scientific revolutions as resulting from paradigm shifts \cite{kuhn1997structure}. We prove that modifications made to axioms of our graphical model have the most transformative potential and then illustrate how several historical instances of transformational creativity can be captured by our framework.

\end{quote}
\end{abstract}

\section{Introduction}
\citeauthor{boden92} \shortcite{boden92} famously divides creative processes into three types: combinatorial, exploratory, and transformational creativity. Combinatorial creativity refers to the combination of concepts and ideas; exploratory creativity involves the traversal of a conceptual space; and transformational creativity entails restructuring the space itself. Historically, groundbreaking scientific discoveries—including Einstein's theory of relativity, the Copernican shift from geocentrism to heliocentrism, and the discovery of non-Euclidean geometry—have fallen under the category of transformational creativity. However, despite scientific creativity being a heavily studied topic in the philosophy and psychology of creativity~\cite{creativity_in_science,flight_from_wonder,simonton2021scientific,koestler_creation}, rigorous formalizations of transformational creativity in science have remained underexplored. 

In this short paper, we formalize scientific conceptual spaces as directed acyclic graphs (DAGs), with vertices that function as generative rules for artifacts and edges that represent logical dependencies among axioms and rules. From our graphical definition, we prove that modifications made to axioms of the graph have the most transformative potential. Finally, we explain how our model can capture historical instances of transformational scientific creativity and discuss potential extensions of this paper in future work. To start, we provide a background on transformational creativity and an account of its role in scientific revolutions.
\subsection{Background on Transformational Creativity}
The notion of transformational creativity is first introduced by \citeauthor{boden92} \shortcite{boden92} as a distinct form of the creative process. Unlike combinatorial creativity that involves novel combinations of familiar elements, or exploratory creativity that entails exploring a defined conceptual space, transformations specifically target what Boden calls the ``enabling constraints" of a conceptual domain: the foundational assumptions, rules, or boundaries that determine what artifacts are intelligible within it \cite{boden92,aguiar2015semiotic}. When these constraints are modified, the generative landscape is restructured, and the scope of new artifacts becomes drastically altered.

Recent work has attempted to give computational grounding to this notion. \citeauthor{santo2024formalcreativitytheorypreliminary} \shortcite{santo2024formalcreativitytheorypreliminary} distinguish between novelty and transformation using concepts from formal learning theory (FLT), showing how transformational creativity can be operationalized as a shift in the hypothesis space informed by new data. Their framework centers the agent’s capacity to adjust the criteria governing the conceptual space, rather than operating under a fixed set of rules. Similarly, \citeauthor{toivanen2018novelsongs} \shortcite{toivanen2018novelsongs} describe an architecture for transformational creativity in generative systems, where meta-level modules modify the constraints that govern the search space of songs to produce novel outputs. This coheres with previous work by \citeauthor{wiggins2006} \shortcite{wiggins2006}, who defines transformational creativity as exploratory search over the space of generative systems that collectively define a conceptual space. \citeauthor{ritchie2006transformational} \shortcite{ritchie2006transformational} takes a similar view, emphasizing the idea that transformational creativity requires modifications to the conceptual structure of a space.

\subsection{Transformational Creativity in Science: Puzzle Solving, Crisis, and Revolution}
Kuhn's history and philosophy of science articulates a paradigmatic view in which periods of normal science---consisting of problem solving within a predefined conceptual space---are followed by anomalies that underscore the weaknesses and inconsistencies of the existing space. These crises prompt scientists to return to first principles and modify existing paradigms or develop new ones, culminating in a scientific revolution that alters a previous worldview \cite{kuhn1997structure}. This model of the scientific process is largely consistent with the three forms of creativity (combinatorial, exploratory, and transformational) popular in the computational creativity community \cite{boden92,varshney_limit_theorems,ritchie2006transformational}. Below, we summarize Kuhn's model, which is later adopted as a conceptual guide for parts of our theory. 

\subsubsection{Phase I: Normal Science} Normal science consists of research based firmly on past scientific achievements that create the foundation for further scientific practice. These achievements establish a conceptual space (or a ``paradigm") for organizing otherwise disconnected facts while also motivating the discovery of new ones. The paradigm also offers practical advantages for new work: because scientists do not need to re-justify widely accepted concepts, they can focus on core contributions that often entail exploratory or combinatorial creativity. Kuhn ascribes the term ``puzzle solving" to the period of normal science and highlights that the goal of this period is \textit{not} to produce major novelties, but rather to extend the existing paradigm, often only incrementally.

\subsubsection{Phase II: Anomaly and Extraordinary Science} The limitations of working within pre-defined paradigms are often highlighted by periods of anomaly and crisis, when the ``puzzle solving" phase, characterized by exploration of existing conceptual spaces, is no longer sufficient to account for the range of phenomena the field wishes to study. This marks the beginning of an \emph{extraordinary science} phase, hallmarked by a loosening of existing rules that promotes the eventual emergence of new paradigms. Extraordinary science often involves a return to first principles, which may entail a ``recourse to philosophy" and ``debate over fundamentals". 

\subsubsection{Phase III: Scientific Revolutions}
The resolution of a scientific crisis is a scientific revolution, including a paradigm shift and a change of worldview. This phase involves a reconfiguration of the constraints that previously governed all inquiry within a scientific domain. As a result, new artifacts abound, and their implications propagate throughout the space. In Kuhn's words:

\begin{quote}
It's as if the scientific community has been transported to another planet, where familiar objects appear in a new light and are accompanied by previously unseen ones. \cite[p.~111]{kuhn1997structure}
\end{quote}

\noindent In the following section, we synthesize Kuhn's structure of scientific revolutions with prior formalizations of transformational creativity into a graphical model.

\section{Characterizing Transformational Scientific Creativity Using Graphs}
\begin{figure*}[t]
\centering
\usetikzlibrary{arrows.meta,positioning}
\tikzset{
  node/.style={
    draw, rounded corners, align=center, font=\scriptsize,
    minimum width=27mm, inner sep=2pt
  }
}
\begin{minipage}[t]{0.48\textwidth}
\centering
\scalebox{0.82}{%
\begin{tikzpicture}[>=Stealth, node distance = 1.25cm and 1.15cm]
  \node[node] (PP){Planetary\\Predictions\\\textit{Almagest} ($V_1$)};
  \node[node,below=of PP] (ED){Epicycle--\\Deferent Model ($V_2$)};
  \node[node,below left=of ED,yshift=-2mm] (G1)
       {\textbf{Axiom $A_1$}\\Earth Stationary\\\& Central };
  \node[node,below right=of ED,yshift=-2mm] (G2)
       {\textbf{Axiom $A_2$}\\Uniform Circular\\Motion };
  \draw[->] (PP) -- (ED);
  \draw[->] (ED) -- (G1);
  \draw[->] (ED) -- (G2);
\end{tikzpicture}}
\caption*{(a)\;Geocentric conceptual space $S$}
\end{minipage}
\hfill
\begin{minipage}[t]{0.48\textwidth}
\centering
\scalebox{0.82}{%
\begin{tikzpicture}[>=Stealth, node distance = 1.25cm and 1.15cm]
  \node[node] (PP2){Planetary\\Predictions ($V_1^{\prime}$)};
  \node[node,below=of PP2] (HM){Heliocentric\\Orbital Model ($V_2^{\prime}$)};
  \node[node,below left=of HM,yshift=-2mm] (H1)
       {\textbf{Axiom $A_1^{\prime}$}\\Sun Central};
  \node[node,below right=of HM,yshift=-2mm] (H2)
       {\textbf{Axiom $A_2^{\prime}$}\\Elliptical Orbits };
  \node[node,right=of PP2,xshift=3mm] (H3)
       {Earth Rotation\\24‑h Day ($V_3^{\prime}$)};
  \draw[->] (PP2) -- (HM);
  \draw[->] (HM)  -- (H1);
  \draw[->] (HM)  -- (H2);
  \draw[->] (H3) -- (HM);
\end{tikzpicture}}
\caption*{(b)\;Heliocentric conceptual space $S'$}
\end{minipage}
\caption{Conceptual‑space DAGs before ($S$) and after ($S'$)
         the Copernican and Keplerian revolution.   All
         non‑axiom artifacts reach at least one axiom (sink node),
         satisfying the formal definition of a conceptual space.}
\label{fig:geo_vs_helio_dags}
\end{figure*}
In this section, we use a formal, graphical structure to define conceptual spaces, clarify the distinction between axioms and rules, and illustrate how existing constraints influence the discovery and articulation of new scientific artifacts. In the science of science, it is often common to use graphical or network frameworks to model the act of, or phenomena related to, scientific discovery \cite{shietalweavingfabricofscience,evanssuprisingcombinations2023,wiggins2006,friedman2000directedcausallearning}. Likewise, our approach focuses on dependency relationships between scientific rules and axioms, allowing us to characterize how changes to axioms may potentially transform conceptual spaces. In contrast to prior, more general accounts of transformational creativity \cite{boden92,ritchie2006transformational,santo2024formalcreativitytheorypreliminary}, we focus specifically on this phenomenon in science.

\subsection{Scientific Conceptual Spaces and Artifacts}
In science, an artifact can be any phenomenon, fact, concept, variable, constant, technique, theory, law, question, goal, or criterion of relevance to a particular field \cite{creativity_in_science}. As explained in \citeauthor{boden92} \shortcite{boden92},  conceptual spaces are defined in terms of the ``enabling constraints" that assert criteria for acceptable and interpretable artifacts. To formalize this notion, we start by adopting similar notation as \citeauthor{santo2024formalcreativitytheorypreliminary}  \shortcite{santo2024formalcreativitytheorypreliminary}, described below:
\begin{enumerate}
    \item \textbf{A formal language:} $L \in \mathcal{E}$, where $\mathcal{E}$ represents a class of recursively enumerable sets. This defines a space of possible artifacts.
    \item \textbf{Constraint:} Any subset $L’ \subseteq L$. This constrains the space of possible artifacts.
    \item \textbf{A potential artifact:} Any string $w \in L$. Later, in Definition~\ref{def:artifact}, we define what it means for an artifact to obey the constraints of a scientific conceptual space. 
\end{enumerate}

\noindent Now, we introduce our graphical definition of a scientific conceptual space.

\begin{definition}[Scientific Conceptual Space]\label{eq:concept_space}
A conceptual space $S$ is a finite, directed acyclic graph (DAG)
\[
    S \;=\; G = (V,E), \qquad V \subseteq \mathcal{P}(L),\; E \subseteq V \times V,
\]
where each vertex $v \in V$ represents a subset $v \subseteq L$. The DAG represents a dependency structure over constraints, with the following properties:

\begin{enumerate}
    \item \textbf{Axioms.}  
          The set of sink nodes
          \[
              \mathcal A \;=\;
              \bigl\{A \in V \mid \not \exists\,u \in V:\,(A, u) \in E \bigr\}
          \]
          are the axioms of $S$ and have no outgoing edges.
    \item \textbf{Edge dependencies.} $(u,v) \in E$ iff $u \subseteq v$, meaning $u$ is a further constraint on $v$, so that $v$ is a necessary condition for $u$. By construction, each $a \in \cal{A} $ satisfies the property of non-dependence:
          $$ \centernot \exists v \in V \; (A \subseteq v)\mbox{.}$$
    \item \textbf{Rules.} A rule is any vertex $R \in V \setminus \calA$. A rule is distinguished from an axiom in that it depends on other nodes in the graph, while axioms are self-justifying and define the initial scope of artifacts worth considering.
    \item \textbf{Rule Prerequisites.}  
          Given any node $v \in V$, the set of constraints on which $v$ depends is given by:
          $$
          \text{prereq}(v) := \{ u \in V \mid \text{there is a path from $u$ to $v$ on $S$} \}  \mbox{.}
          $$
          This represents all the nodes that must be assumed before the constraint $v$ can be invoked at all.
    \item \textbf{Rule Dependents.}
         Given any node $v \in V$, the set of constraints that depend on $v$ is given by
         $$
         \text{depends}(v) := \{ u \in V  \mid \text{there is a path from $u$ to $v$ on $S$}\} \mbox{.}
         $$
         This represents the set of all other constraints that cannot be invoked without first assuming $v$.
\end{enumerate}
\end{definition}

\begin{table*}[tb]
    \centering
    \caption{Historical instances of transformational creativity in science, demonstrating how the transformation of axioms $A \to A'$ transformed the conceptual spaces $S \to S'$.}
    \renewcommand{\arraystretch}{1.4}
    \begin{tabular}{@{}>{\raggedright\arraybackslash}p{3.2cm} 
                    >{\raggedright\arraybackslash}p{4cm} 
                    >{\raggedright\arraybackslash}p{5.2cm} 
                    >{\raggedright\arraybackslash}p{4cm}@{}}
        \toprule
        \textbf{$S$ (Initial Space)} & \textbf{$A$ (Selected Axioms)} & \textbf{$A'$ (Modified Axioms)} & \textbf{$S'$ (Transformed Space)} \\
        \midrule
        Newtonian Physics & Absolute time; separate space/time; gravity as force & Relative time; spacetime unity; gravity as curvature & Relativistic Physics\\
        Ptolemaic Astronomy & Earth-centered universe & Sun-centered model & Heliocentric system  \\
        Euclidean Geometry & Parallel Postulate: through a point not on a line there is exactly one parallel line & Altered Parallel Postulate: through such a point there are \emph{zero} (elliptic) or \emph{at least two} (hyperbolic) parallels & Non‑Euclidean geometry (elliptic and hyperbolic)  \\
        \bottomrule
    \end{tabular}

    \label{tab:transformational_creativity}
\end{table*}

In \textit{Phase I: Normal Science} of Kuhn's model, a paradigm's structure and constraints are assumed as a foundation to explore within the conceptual space they induce \cite{kuhn1997structure}. Likewise, in our definition, axioms do not depend on other constraints but are instead assumed for the possibility of inquiry within the space. Also note that no connectedness assumption about the graph is made, meaning that the graph can have multiple connected components to reflect disconnected but related subgraphs of knowledge, corresponding to subsets of constraints over possible artifacts. A new artifact is generated from a conceptual space in the following way.

\begin{definition}[Scientific Artifact]\label{def:artifact}
Given a conceptual space $S = (V, E)$, a scientific artifact is an ordered triple
\[
      a \;=\; (h,\,\sigma, w), \qquad h \in \mathcal H,\; \sigma \subseteq V,\;w \in \sigma,\
\]
where
\begin{enumerate}
  \item $h$ is a header drawn from a domain-specific set $\mathcal H$ (e.g.\ ``phenomenon'', ``law'', ``measuring‐technique'', ``open‐question'') that fixes the kind of the artifact.
  \item $\sigma := \{V_1, \dots, V_n\}$ is a non-empty support set of vertices in $S=(V,E)$, representing a subset of constraints whose joint satisfaction makes the artifact well-defined or intelligible.
  \item $w \in \bigcap_{i=1}^n V_i$ is a string representing the artifact.
\end{enumerate}
\end{definition}

\subsection{Transformational Scientific Creativity}
Transformational creativity occurs when modifications are made to a conceptual space. Our definition of a space $S = (V, E)$ naturally supports transformations in two ways.
\begin{enumerate}
    \item \textbf{Edge Modifications ($\oplus, \ominus$):} An edge $e \in E$ is removed ($\ominus$) and/or a new edge $e' \notin E$ is added ($\oplus$).
    \item \textbf{Vertex Modifications ($\oplus, \ominus, \prime$):} A vertex $v \in V$ is removed ($\ominus$) and/or a new vertex $v' \in \cal{A} \setminus V$ is added ($\oplus$). This also encompasses the case in which the content of a vertex is modified $(\prime)$.
\end{enumerate}

For simplicity, we consider the case of modifying an existing node rather than adding a new one or removing an existing one. Still, in future work, our results can be extended to these cases, as well as to the modification of edges. Below, we quantify the transformative potential of nodes in the conceptual space's graph.

\begin{definition}[Transformativeness via Modification] \label{def:trans_modification}
Given a node $v \in V$, its transformative potential is the number of nodes that depend on $v$, given by 
$$T^p_\text{mod}(v) := | \text{depends}(v)|$$
Since it is not possible to invoke any rule in $\text{depends}(v)$ without assuming or observing $v$, if $v$ is modified, then this has the potential to transform the structure of every node in $\text{depends}(v)$.
\end{definition}

As articulated in \textit{Phase III: Scientific Revolutions} of \citeauthor{kuhn1997structure} \shortcite{kuhn1997structure}, revolutions occur when the foundational principles of existing spaces are modified, leading to paradigm shifts. Our definition of a scientific conceptual space naturally has the property that modifying its most foundational principles (axioms) has the most transformative potential. We prove that this is the case below.

\begin{theorem}[Modifying Axioms has the Most Transformative Potential] \label{thm:axioms}
    Given a conceptual space $S$ for which $V \setminus \calA \neq \emptyset$, the node with the greatest transformative potential when modified $T^p_\text{mod}$ must be an axiom. Formally stated, $$\text{argmax}_{v \in V} T^p_\text{mod}(v) \in \calA$$
\end{theorem}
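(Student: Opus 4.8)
The plan is to prove the stronger claim that \emph{every} non-axiom node is strictly beaten by some axiom; since a finite nonempty DAG always possesses sinks (so $\mathcal{A}\neq\emptyset$), this immediately forces every maximizer of $T^p_\text{mod}$ into $\mathcal{A}$, while the hypothesis $V\setminus\mathcal{A}\neq\emptyset$ merely guarantees there is a rule to beat and so makes the claim substantive. Concretely, I would fix an arbitrary rule $r\in V\setminus\mathcal{A}$ and produce an axiom $A$ with $T^p_\text{mod}(A) > T^p_\text{mod}(r)$.

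First I would locate such an $A$. Since $r$ is not a sink it has an outgoing edge, and following outgoing edges generates a directed walk that---because $S$ is a finite acyclic graph---can neither revisit a vertex nor continue indefinitely, so it must terminate at a sink, i.e.\ at some axiom $A\in\mathcal{A}$ reachable from $r$ by a directed path $P$.

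Next I would compare the two dependent-sets, recalling that $\text{depends}(v)=\{u\in V:\text{there is a directed path from } u \text{ to } v\}$. The containment $\text{depends}(r)\subseteq\text{depends}(A)$ is routine: any $u$ admitting a path to $r$ can have that path prolonged along $P$ to reach $A$, so $u\in\text{depends}(A)$. The crux is \emph{strict} inclusion, for which I would exhibit a vertex in $\text{depends}(A)$ but not in $\text{depends}(r)$. Under the convention that each node reaches itself, the axiom $A$ itself serves: $A\in\text{depends}(A)$ trivially, whereas $A\notin\text{depends}(r)$ because a sink has no outgoing edge and hence no path back to the distinct vertex $r$. (If instead paths are required to have positive length, one uses $r$ in the same role, noting $r\in\text{depends}(A)$ via $P$ but $r\notin\text{depends}(r)$ by acyclicity.) Either way acyclicity supplies one extra dependent, giving $|\text{depends}(r)| < |\text{depends}(A)|$, that is $T^p_\text{mod}(r) < T^p_\text{mod}(A)$.

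Finally I would assemble the conclusion: the maximum of $T^p_\text{mod}$ over the finite vertex set $V$ is attained, and by the previous step it is never attained at a rule, whence $\text{argmax}_{v\in V}T^p_\text{mod}(v)\in\mathcal{A}$. The main obstacle is not the monotone containment, which is mere path-prolongation, but certifying \emph{strict} inclusion cleanly; the whole theorem hinges on using acyclicity to guarantee that single extra dependent, so I would state the self-path convention explicitly to keep the ``$+1$'' unambiguous.
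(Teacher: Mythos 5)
Your proof is correct, but it takes a genuinely different route from the paper's. The paper argues by contradiction with an extremal choice: it lets $R$ be the rule maximizing $|\mathrm{depends}(\cdot)|$ over $V\setminus\mathcal{A}$, picks a single outgoing edge $(R,u)$, notes $|\mathrm{depends}(u)|\ge|\mathrm{depends}(R)|+1$, and then uses the maximality of $R$ \emph{among rules} to force $u\in\mathcal{A}$, contradicting the assumption that $R$ was a global maximizer. You instead prove the stronger universal claim directly: \emph{every} rule $r$, not just a maximal one, is strictly dominated by some axiom, which you construct by following outgoing edges until finiteness and acyclicity force the walk to terminate at a sink $A$ reachable from $r$; then $\mathrm{depends}(r)\subseteq\mathrm{depends}(A)$ by path prolongation, and acyclicity supplies the strict extra dependent. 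Your version buys two things. First, it dispenses with the extremal scaffolding: the paper's inference ``$u$ cannot be in $V\setminus\mathcal{A}$'' is valid only because of how $R$ was chosen, whereas your domination claim holds for arbitrary rules and makes the final argmax step immediate. Second, you surface a point the paper glosses over: the paper's ``$+1$'' tacitly assumes $R\in\mathrm{depends}(u)$ while $R\notin\mathrm{depends}(R)$, i.e., a particular convention about whether a node reaches itself, and your two-case treatment (using $A$ itself under the self-path convention, or $r$ under the positive-length convention) resolves exactly that ambiguity. What the paper's route buys in exchange is brevity---a single edge suffices because maximality does the remaining work, whereas you need the walk-to-sink argument to locate a dominating axiom for an arbitrary rule. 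Your side remark about the hypothesis $V\setminus\mathcal{A}\neq\emptyset$ is also apt: it is needed only to make the statement substantive, not for the conclusion itself.
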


\begin{proof}
Let $R$ be the non-axiom node in $V \setminus \mathcal{A}$ with the largest number of dependents (transformative potential):
\[
R := \underset{v \in V \setminus \mathcal{A}}{\arg\max} ~ |\mathrm{depends}(v)| = \underset{v \in V \setminus \mathcal{A}}{\arg\max} ~ |T^p_\text{mod}(v)| 
\]
Assume for contradiction that $R$ also has the most transformative potential among all nodes in the graph, including axioms.

Since $R$ is not an axiom, there exists some $u \in V$ such that $(R, u) \in E$, meaning that $R$ depends on $u$. By the structure of DAGs, any dependent of $R$ is also a dependent of $u$. Therefore, $u$ satisfies 
\[
|\mathrm{depends}(u)| \geq |\mathrm{depends}(R)| + 1
\]
meaning that it has strictly more dependents than $R$. However, since we defined $R = \arg\max_{v \in V \setminus \mathcal{A}} ~ |\mathrm{depends}(v)|$, $u$ cannot be in $V \setminus \mathcal{A}$ as well, so we must have that $u \in \calA$. Therefore, we have found an axiom with more dependents (transformative potential) than $R$, leading to a contradiction. Since $R$ was chosen arbitrarily, this means there is no rule $v \in V \setminus \calA$ with more transformative potential than the axiom with the most transformative potential, which is what we wanted to show.
\end{proof}

\subsection{Conceptual Space Operations}
\citeauthor{ritchie2006transformational} \shortcite{ritchie2006transformational} presents five critical operations that conceptual spaces should permit. Below, we explain how our definition supports each.
\begin{enumerate}
    \item \textbf{Locating an artifact $A$ within $S$:} An artifact $a = (h, \sigma, w)$ can be located according to the necessary set of vertices $\sigma$ given in Definition~\ref{def:artifact} which make $a$ intelligible.
    \item \textbf{Rate artifacts $A, B$ for similarity with respect to $S$:} Given artifacts $a, b$, their similarity with respect to $S$ can be computed using a Jaccard similarity over their vertex support sets \cite{jaccard1901etude}.
    \item \textbf{Induce a space definition from artifacts $A_1, \dots, A_n$:} Given artifacts $a_1, \dots, a_n$ and the corresponding set of vertices $V_a := \bigcup_{i=1}^n \sigma_i$ which make the artifacts well-defined or intelligible, one can construct a DAG by learning or discovering the edge dependencies between the set of vertices $V_a$. 
    \item \textbf{Given existing artifacts, generate a new artifact:} After arranging existing artifacts into a conceptual space $S$, a new artifact can be generated according to Definition~\ref{def:artifact}.
    \item \textbf{Create a revised definition of $S \to S'$ to include artifact $A$}: This can be done by applying vertex or edge modifications to the graph.
\end{enumerate}

\subsection{Historical Illustrations} 
Our definition of a conceptual space supports interpretations of historical instances of transformational scientific creativity. Importantly, in each case, our model does not attempt to reconstruct the steps used by the scientist(s) making their transformation, but instead attempts to provide an explanation for why such instances were so transformative. In Table~\ref{tab:transformational_creativity}, we provide three examples of transformational creativity, explained in more detail below.

\subsubsection{From Geocentrism to Heliocentrism}
In the shift from Ptolemaic astronomy to a heliocentric system, the axiom that the Earth was the center of the universe $(A_1)$ was modified to reflect a heliocentric model with the sun at the center $(A_1^\prime)$. Rules in $S$ which previously depended on $A_1$ necessarily had to be transformed after the axiom modification $A_1^\prime$ was made.

\subsubsection{Relativity Theory} In the shift from Newtonian ($S$) to relativistic mechanics ($S^\prime$), space and time were originally considered absolute (axiom $A_1$), so that momentum obeyed the formula $p = mv$ (rule $R_1$), but relativity theory revealed that this original formulation only held locally for $v \ll c$. When the new axioms that physical laws are Lorentz invariant (axiom $A_1^\prime$) and the speed of light is constant (axiom $A_2^\prime$) 
were introduced, the momentum equation had to be reformulated to reflect the dependence on these new underlying principles, yielding rule $R_1^\prime$: that $p = \frac{mv}{\sqrt{1 - v^2/c^2}}$ 

\subsubsection{Non-Euclidean Geometry} The discovery of non-Euclidean geometry, which removed the parallel postulate $P$ from the prior Euclidean axiom set $\calA$, yielded a new axiom set $\calA' := \calA \setminus P$. As a result of this transformation, derived rules such as the angle sum of a triangle equaling 180 degrees ($R_1$) were modified to reflect that the computation of angular sums no longer depended on the parallel postulate ($R_1'$).

\section{Towards AI Super-Scientists: Using Our Framework for Transformative Discovery}
In light of a series of recent works using AI for scientific ideation \cite{krenn_ideas,llms_novel,llm_sci_ideas,scimon}, experiment design \cite{curie}, and even end-to-end discovery \cite{zochi2025,sakana2024}, we believe our theory opens a promising avenue towards building the first AI super-scientists poised to make transformative breakthroughs. By coupling the wide-reaching creative abilities of large language models (LLMs) with our graphical theory above, we envision a neuro-symbolic approach to transformative scientific discovery that, given a particular scientific field or sub-field $\mathcal{D}$, can be built using the following components.
\begin{enumerate}
    \item{\textbf{Background Knowledge:}} A collection $\mathcal{C}$ of scientific literature (surveys, conference proceedings, journal papers, etc.) representing background knowledge about $\mathcal{D}$.
    \item{\textbf{Graph Construction:}} A DAG construction technique $M: \mathcal{C} \to \mathcal{G}$, where $\mathcal{G}$ is a space of graphs structured according to Definition~\ref{eq:concept_space}, representing the dependency structure of rules and axioms in $\mathcal{D}$.
    \item{\textbf{Problem Identification:}} The identification of a set $\mathcal{P}$ of ``open problems'' or anomalies that currently plague the research community in $\mathcal{D}$, obtained from the background literature $A : \mathcal{C} \to \mathcal{P}$.
    \item{\textbf{Idea Generation:}} Finally, an idea generation module $$I : \mathcal{L} \times \mathcal{G} \times \mathcal{P} \to \mathcal{T} \times \mathcal{G}$$ that maps from a space of large language models $\mathcal{L}$, dependency graphs $\mathcal{G}$, and open problems $\mathcal{P}$ to produce a transformative idea text $t \in \mathcal{T}$ and its corresponding graph transformation $G' \in \mathcal{G}$.
\end{enumerate}
This approach offers a natural way to incorporate knowledge graphs to enhance the use of LLMs for scientific discovery \cite{scimon,krenn_ideas}, a direction which has shown early promise in improving the technical detail of generated ideas \cite{scimon}. Moreover, by making graph transformations explicit, this technique also boasts significant interpretability benefits.

\section{Conclusion}
In this paper, we provided a graphical theory of transformational scientific creativity, illustrated the transformative potential of modifying axioms, and used our framework to elucidate the transformativeness of several groundbreaking historical discoveries. 

Subsequent investigations could extend our theory by providing greater depth, breadth, and nuance to the framework, contextualized within the history and philosophy of science. For example, we suggest exploring in-depth historical illustrations for one to three exemplary discoveries, such as those in Table~\ref{tab:transformational_creativity}, showing how all components of the theory are instantiated and transformed. Similarly, it may be insightful to study a large collection of scientific discoveries, such as those addressed in \citeauthor{haven100} \shortcite{haven100} and \citeauthor{creative_comb_rep} \shortcite{creative_comb_rep}, and delineate a ``logic of discovery'' via common graph transformation patterns among these cases. Historical studies such as these may suggest criteria for ``good'' graph transformations and highlight distinctions between edge and vertex modifications.

One notable implication of our theory's rigid structure is that it may be best suited to scientific areas that involve formal systems, such as mathematics, engineering theory, and computer science. Nevertheless, in exchange for generality, we obtain more specificity regarding how the transformational creativity process can be formulated in these domains, opening the door for future theoretical or computational inquiry. Our work, like \citeauthor{santo2024formalcreativitytheorypreliminary} \shortcite{santo2024formalcreativitytheorypreliminary}, is a crucial first step towards formalizing transformational creativity.

\section{Acknowledgments}
The authors thank Jonathan Livengood for helpful feedback on an earlier version of this paper.

\bibliographystyle{iccc}
\bibliography{iccc}

\end{document}